\newtheorem{definition}{Definition}
\newtheorem{proposition}{Proposition}
\title{\textsc{Cluster Attack}: Query-based Adversarial Attacks on Graphs\\ with Graph-Dependent Priors}
\author{
Zhengyi Wang$^{1,3}$
\and
Zhongkai Hao$^{1}$\and
Ziqiao Wang$^{1}$\and
Hang Su$^{\ast 1,2,3}$\And
Jun Zhu\thanks{Corresponding author.} $^{1,2,3}$
\affiliations
 $^{1}$Department of Computer Science \& Technology, Institute for AI, BNRist Center\\
Tsinghua-Bosch Joint ML Center, THBI Lab, Tsinghua University\\
$^{2}$Peng Cheng Laboratory\\
$^{3}$ Tsinghua University-China Mobile
Communications Group Co., Ltd. Joint Institute\\
\emails
\{wang-zy21, hzj21, ziqiao-w20\}@mails.tsinghua.edu.cn,
\{suhangss, dcszj\}@tsinghua.edu.cn
}
\begin{document}

\maketitle

\begin{abstract}
While deep neural networks have achieved great success in graph analysis, recent work has shown that they are vulnerable to adversarial attacks. Compared with adversarial attacks on image classification, performing adversarial attacks on graphs is more challenging because of the discrete and non-differential nature of the adjacent matrix for a graph. In this work, we propose Cluster Attack --- a Graph Injection Attack (GIA) on node classification, which injects fake nodes into the original graph to 
degenerate the performance of graph neural networks (GNNs) on certain victim nodes while affecting the other nodes as little as possible. We demonstrate that a GIA problem can be equivalently formulated as a graph clustering problem; thus, the discrete optimization problem of the adjacency matrix can be solved in the context of graph clustering. In particular, we propose to measure the similarity between victim nodes by a metric of \emph{Adversarial Vulnerability}, which is related to how the victim nodes will be affected by the injected fake node, and to cluster the victim nodes accordingly. Our attack is performed in a practical and unnoticeable query-based black-box manner with only a few nodes on the graphs that can be accessed. Theoretical analysis and extensive experiments demonstrate the effectiveness of our method by fooling the node classifiers with only a small number of queries. 
\end{abstract}

\section{Introduction}
Graph neural networks (GNNs) have obtained promising performance in various applications to graph data~\cite{ying2018graph,qiu2018deepinf,chen2019semi}. Recent studies have
shown that GNNs, like other types of deep learning models, are also vulnerable to adversarial attacks~\cite{dai2018adversarial,Z_gner_2018}. However, there still exists a gap between most of the existing attack setups and practice where the capability of an attacker is limited. Instead of directly modifying the original graph (aka., Graph Modification Attack, GMA)~\cite{yang2021derivative}, we focus on a more practical setting to inject extra fake nodes into the original graph (aka., Graph Injection Attack, GIA)~\cite{tdgia}. We perform query-based attack, which indicates that the attacker has no knowledge on the victim model but can access the model with a limited number of queries. 
As an example in Figure~\ref{fig: Fake Node Attack}, high-quality users in a social network may be misclassified as low-quality users after being connected with a fraudulent user whose features (maybe the meta data of the account) are carefully crafted utilizing query information from the target model. 


\begin{figure}[t]
	\centering
	\includegraphics[width=8.5cm]{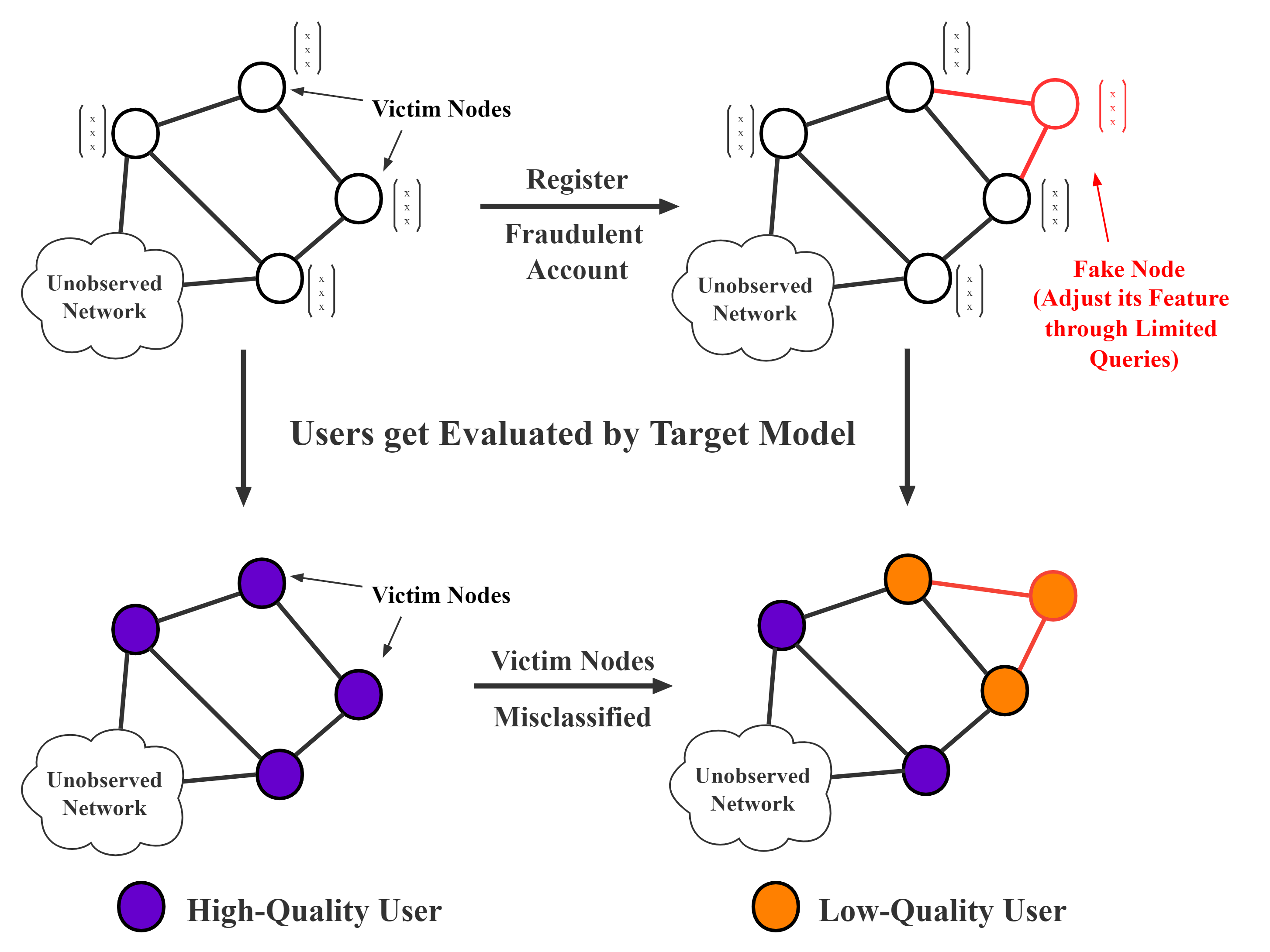}
	\caption{An illustration of query-based graph injection attack with partial information.
	}
	\label{fig: Fake Node Attack}
\end{figure}


Compared to adversarial attacks on image classification~\cite{10.1145/3128572.3140448,ilyas2018blackbox,Dong-2018}, the study of query-based adversarial attacks on graph data is still at an early stage. 
Existing attempts include training a surrogate model using query results \cite{wan2021adversarial}, a Reinforcement-Learning-based method~\cite{10.1145/3447548.3467416}, derivative-free optimization \cite{yang2021derivative} and a gradient-based method~\cite{10.1145/3460120.3484796}. However, most of the existing attacks simply adopt the optimization methods from other fields, such as image adversarial attacks, without utilizing the rich structure of graph data, which has considerable potential to achieve a higher performance attack.

Unlike most previous work, we only allow the adversary to access the information of a small part of the nodes since it is usually impossible to observe the whole graph, especially for large networks in practical scenarios. Moreover, we perform a \emph{black-box} attack, which does not allow the adversary to have access to the model structures or parameters. The adversary has only a limited number of queries on the victim model about the predicted scores of certain nodes, which is more practical. 
It is also noted that, owing to the non-i.i.d. nature of graph data, connecting victim nodes to fake nodes may have side effects on the accuracy of victim nodes' neighbors, which is not our purpose. To our best knowledge, this is the first attempt to limit the influence to a certain range of victim nodes and protect the other nodes from being misclassified simultaneously.

We propose a unified framework for query-based adversarial attacks on graphs, which subsumes existing methods. In general, the attacker decides on the current perturbation as a conditional distribution on history query results and current graph status. 
Under this framework, we propose a novel attack method named \textsl{Cluster Attack}, which considers the graph-dependent priors by better utilizing the unique structure of the graph. In particular, 
we try to find an equivalent discrete optimization problem. We first demonstrate that a GIA problem can be formulated as an equivalent graph clustering problem. Because the discrete optimization problem of an adjacent matrix can be solved in the context of graph clustering, we prevent query-inefficient searching in the non-Euclidean space. The resulting cluster serves as a graph-dependent prior for the adjacent matrix, which utilizes the vulnerability of the local structure. 
Second, the key challenge in graph clustering is to define the similarity metric between nodes. We propose a metric to measure the similarity of victim nodes, called Adversarial Vulnerability; this is related to how the victim nodes will be affected by the injected fake node, and we cluster the victim nodes accordingly. The Adversarial Vulnerability is only related to the local structure of the graph and thus can be handled with only part of the graph observed. 



Our contributions are summarized as follows:
\begin{itemize}
\item We propose a unified framework for query-based adversarial attacks on graphs, which formulates the current perturbation as a conditional distribution on the history of query results and on the current graph status.
\item We propose \textsl{Cluster Attack}, an injection adversarial attack on a graph, which formulates a GIA problem as an equivalent graph clustering problem and thus solves the discrete optimization of an adjacent matrix in the context of clustering.
\item After providing theoretical bounds on our method, we empirically show that our method achieves high performance in terms of success rate of attacks under an extremely strict setting with a limited number of queries and only part of the graph observed.
\end{itemize}

\section{Background}
In this section, we present recent works on node classification and adversarial attacks on graphs.

\subsection{Node Classification on a Graph}
Node classification on graphs is an important task, with a wide range of applications such as user classification in financial networks. 
It aims to carry out classification by aggregating the information from neighboring nodes~\cite{kipf2016semisupervised,hamilton2017inductive,velikovi2017graph}. Recent work has carried out node classification using graph convolutional networks (GCNs)~\cite{kipf2016semisupervised}, which is one of the most representative GNNs. Specifically, let a graph be $G=(\mathbf{A},\mathbf{X})$, where $\mathbf{A}$ and $\mathbf{X}$ respectively represent the adjacency matrix and the feature matrix. Given a subset of labeled nodes in the graph, GCN aims to predict the labels of the remaining unlabeled nodes in the graph as
\begin{equation}
    f(G)=f(A,X)=\mbox{softmax}\left( \hat{\mathbf{A}}\sigma(\hat{\mathbf{A}}\mathbf{X}\mathbf{W}^{(0)})\mathbf{W}^{(1)} \right),\label{GCN}
\end{equation}
where $\hat{\mathbf{A}}$ is the normalized adjacency matrix; $\mathbf{W}^{(0)}$ and $\mathbf{W}^{(1)}$ are parameter matrices; $\sigma$ is the activation function; and $f(G)$ is the prediction corresponding to each node.
\subsection{Graph Adversarial Attacks}

Numerous methods have been developed to perform adversarial attacks on graphs. Early works focused on modifying the original graph (i.e., Graph Modification Attack) \cite{dai2018adversarial,Z_gner_2018}, while some recent works \cite{tdgia,gnia} have focused on a more practical setting to inject extra fake nodes into the original graph (i.e., Graph Injection Attack). For query-based graph adversarial attacks, as shown in Table \ref{unified}, various efforts have been made. Some have focused on attacking the task of node classification \cite{yang2021derivative} while there also have been efforts to attack graph classification \cite{wan2021adversarial,10.1145/3447548.3467416,10.1145/3460120.3484796}, with gradient-based \cite{10.1145/3460120.3484796} or gradient-free methods \cite{yang2021derivative,wan2021adversarial,10.1145/3447548.3467416}. Nevertheless, most of the existing attacks just adopt optimization methods from other fields (especially image adversarial attack), ignoring the unique structure of graph data. In this work, we propose to attack in a graph-specific manner utilizing the inherent structure of a graph. 

\section{A Unified Framework for Query-Based Adversarial Attacks on Graphs}
\begin{table*}[tbhp]
	\centering
	\small
	\begin{tabular}{lll}
		\toprule
        Method & Optimization Step & Target Task \\
        \midrule
        Random&$\Delta G\sim \mbox{Random}$ & - \\
        \midrule
        GRABNEL  &\multirow{2}{*}{$\Delta G = \underset{\Delta G}{\mbox{argmin}}\,\mathcal{L}_{sur}(G+\Delta G)$}&  Graph Classification\\
        \cite{wan2021adversarial}&&(GMA+GIA)\\
        \midrule
        DFO \cite{yang2021derivative}&$\Delta G = F(G, \delta)-G$, $\delta \sim$ DFO & Node Classification (GMA)\\
        \midrule
        \cite{10.1145/3460120.3484796}&$\nabla p(\Delta A) = \frac{1}{Q}\sum_{1}^{Q}\mbox{sgn}(\frac{p(\Delta A +\mu u_q)-p(\Delta A)}{\mu}u_q)$, $u_q\sim$ Gaussian& Graph Classification (GMA)\\
        \midrule
        Rewatt \cite{10.1145/3447548.3467416} &$\Delta G\sim p(\cdot|G)$ from Reinforcement Learning Agent & Graph Classification (GMA)\\
        \midrule
        \multirow{2}{*}{Cluster Attack (Ours)} &$\left\{
        \begin{array}{l}
              \Delta \mathbf{X}_{fake} = \mathbb{I}(\mathcal{L}(\mathbf{A}^+, \mathbf{X}^+)>\mathcal{L}(\mathbf{A}^+, \begin{bmatrix}\mathbf{X}\\\mathbf{X}_{fake}+\delta \mathbf{X}_{ij}\end{bmatrix}))\cdot\delta \mathbf{X}_{ij}\\
            \nabla_{\mathbf{X}_{fake}}\mathbb{E}\,\mathcal{L}(\mathbf{A}^+,\mathbf{X}^+) = \frac{1}{\sigma n}\sum_{i=1}^{n}Z_i\mathcal{L}(\mathbf{A}^+,\begin{bmatrix}\mathbf{X}\\\mathbf{X}_{fake}+\sigma Z_i\end{bmatrix})
        \end{array}
        \right.$& \multirow{2}{*}{Node Classification (GIA)} \\
        &$\Delta A\sim$ cluster prior&\\
		\bottomrule
	\end{tabular}
	\caption{Existing query-based methods on graph adversarial attacks.}
	\label{unified}
\end{table*}

We now present a unified framework for query-based adversarial attacks as well as the threat model and loss function.

\subsection{Graph Injection Attack}
Given a small set of victim nodes $\Phi_{\mathbf{A}}\subseteq \Phi$ in the graph, the goal of graph injection attack is to perform mild perturbations on the graph $G=(\mathbf{A},\mathbf{X})$, leading to $G^{+}=(\mathbf{A}^{+},\mathbf{X}^{+})$, such that the predicted labels of the victim nodes in $\Phi_{\mathbf{A}}$ are changed into the target labels. This goal is usually achieved by optimizing the adversarial loss $\mathcal{L}(\cdot)$ under constraints as:
\begin{equation}
    \min_{G^{+}}\, \mathcal{L}(G^{+})\,\,s.t.\,\,\mbox{dist}(G, G^{+})\leq \Delta\label{overall0},
\end{equation}
where $\mbox{dist}(G, G^{+})$ denotes the magnitude of perturbation and has to be within the adversarial budget $\Delta$. In this section, it can be specified as $|\Phi_{fake}|\leq \Delta_{fake}$ and $\sum_{u\in \Phi_{fake}} d(u) \leq \Delta_{edge}$ with $d(u)$ being the degree of node $u$.


In graph injection attacks, we have the augmented adjacent matrix $\mathbf{A}^{+}=\begin{bmatrix}\mathbf{A}&\mathbf{B}^{T}\\\mathbf{B}&\mathbf{A}_{fake}\end{bmatrix}$ and the augmented feature matrix  $\mathbf{X}^{+}=\begin{bmatrix}\mathbf{X}\\\mathbf{X}_{fake}\end{bmatrix}$. We further use $\Phi^{+}=\Phi\cup \Phi_{fake}$ to denote the node set of $G^{+}$. In particular, $\mathbf{X}_{fake}$ corresponds to the feature of fake nodes; $\mathbf{B}$ denotes the connections between fake nodes and original nodes, and $\mathbf{A}_{fake}$ denotes the mutual connections between fake nodes. The malicious attacker manipulates $\mathbf{A}_{fake},\mathbf{B}$ and $\mathbf{X}_{fake}$, leading to as low classification accuracy on $\Phi_{\mathbf{A}}$ as possible. 

In a query-based adversarial attack on graphs, we uniformly formulate the update of a graph at time $t$ as
\begin{equation}
    \Delta G_{(t)}\sim  p\big(\Delta G_{(t)}|\{f(G_i)|i=1,2,...,q_t\},G_{t})\mbox{,}
\end{equation}
where $\{f(G_i)|i=1,2,...,q_t\}$ denotes the feedback (hard labels or predicted values) of total $q_t$ queries in the history from the target model. A curated list of current query-based graph adversarial attacks is shown in Table \ref{unified}. In general, perturbation $\Delta G_{(t)}$ in time step $t$ is conditioned on the history of query results $\{f(G_i)|i=1,2,...,q_t\}$ and the current graph status $G$. Previous work has focused on using different optimization methods, including reinforcement learning \cite{10.1145/3447548.3467416} and gradient-based optimization \cite{10.1145/3460120.3484796} to decide $\Delta G$ without utilizing the graph structure explicitly.

\subsection{Threat Model}
\paragraph{Adversary Capability.} We greatly restrict the attacker's ability so that we can only make connections between victim nodes and fake nodes. No connections can be made between fake nodes because connected malicious fake nodes are easier for detectors to locate. The number of new edges $\Delta_{edge}$ is set as barely the number of victim nodes $\Phi_A$, which means that each victim node is connected by only one new edge.
\paragraph{Protected Nodes.} As mentioned above, owing to the non-i.i.d nature of graph data, attacking victim nodes may have unintended side effects on their neighboring nodes. While attacking victim nodes, we simultaneously aim to keep the labels of untargeted nodes unchanged, to make our perturbation unnoticeable. In our setting, we try to protect $\mathcal{N}_{k}(\Phi_{A})$, which are the neighbors of the victim nodes within $k$-hop.

\paragraph{Partial Information.} It is practical to assume that the attacker has access to only part of the graph when conducting the attack. As mentioned above, we adopt an extremely strict setting so that we can only observe the features and connections of the observed nodes defined as
\begin{equation}
    \Phi_o=\Phi_A\cup\mathcal{N}_{k}(\Phi_{A})\cup\Phi_{fake}\label{observed_nodes}.
\end{equation}

\paragraph{Limited Queries.} It is practical in real scenarios that we have a limited number of queries to the victim model rather than full outputs of arbitrarily many chosen inputs. In our setting, we can query at most $K$ times in total for the predicted scores of the observed nodes. The architecture and parameters about the victim model are unknown to the attacker.

\begin{figure*}[t]
	\centering
	\includegraphics[width=\linewidth]{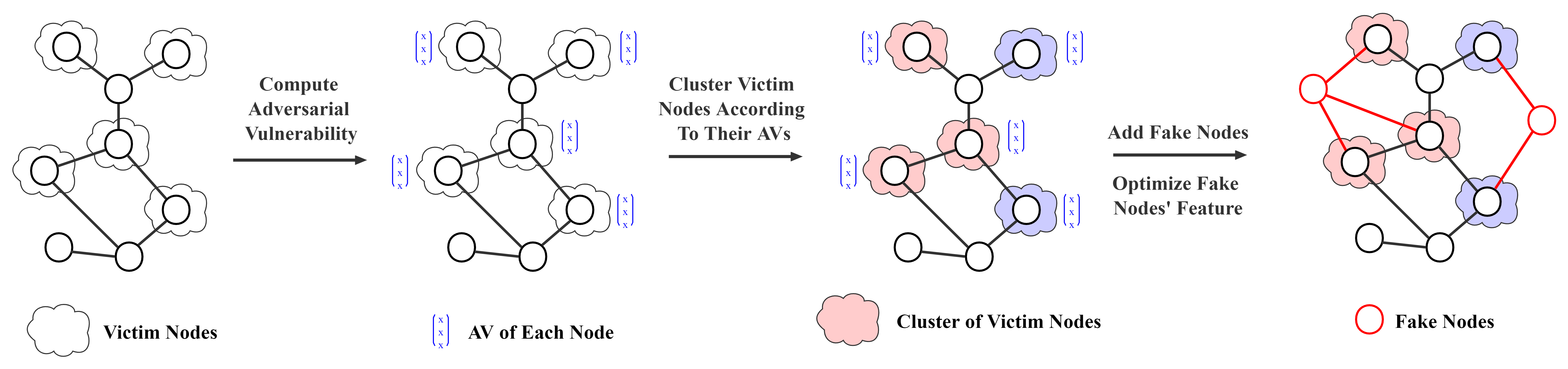}
	\caption{An illustration of Cluster Attack. We first compute Adversarial Vulnerability for each victim node with a limited number of queries; after that, we cluster the victim nodes and inject fake nodes accordingly; and finally we optimize the fake nodes' features.}
	\label{fig: Cluster Attack}
\end{figure*}

\subsection{Loss Function}
We aim to make the classifier misclassify as many nodes as possible in the victim set of $\Phi_{\mathbf{A}}$ . As it is nontrivial to directly optimize the number of misclassified nodes since the objective is discrete, we choose to optimize a surrogate loss function:
\begin{align}
\min_{G^{+}}&\, \mathcal{L}(G^{+})\triangleq \sum_{v\in \Phi_{\mathbf{A}}}{\ell(G^{+},v)}+\lambda\sum_{v\in \mathcal{N}_{k}(\Phi_{\mathbf{A}})}{\ell_{\mathcal{N}}(G^{+},v)},\label{overall_goal}\notag\\
& s.t.\,\,\mbox{dist}(G, G^{+})\leq \Delta,
\end{align}
where $\ell(G^{+},v)$ and $\ell_{\mathcal{N}}(G^{+},v)$ represent the loss functions for each victim node and for a protected node, respectively. A smaller $\ell(G^{+},v)$ means that node $v$ is more likely to be misclassified by the victim model $f$; by contrast, a smaller $\ell_{\mathcal{N}}(G^{+},v)$ means that the predicted label of node $v$ is less likely to be changed during our attack. In particular, we design our loss in the manner of the C\&W loss \cite{carlini2016evaluating}, and define: 
\begin{equation}
\begin{aligned}
    &\ell(G^{+},v) = \\
    &\sigma\left( \max_{y_i\neq y_t} ([f(\mathbf{A}^{+},\mathbf{X}^{+})]_{v,y_i})-[f(\mathbf{A}^{+},\mathbf{X}^{+})]_{v,y_t} \right),    \label{l1}
\end{aligned}
\end{equation}where $y_t$ stands for the target label of node $v$ and the attacker succeeds only when node $v$ is misclassified as $y_t$. $\sigma(x)=\max(x,0)$. $[f(G^{+})]_{v,y_i}$ denotes the output logit of node $v$ of class $y_i$. For protected nodes, we define:
\begin{equation}
\begin{aligned}
&\ell_{\mathcal{N}}(G^{+},v) =\\
& \sigma\left( \max_{y_i\neq y_g}  [f(\mathbf{A}^{+},\mathbf{X}^{+})]_{v,y_i})-[f(\mathbf{A}^{+},\mathbf{X}^{+})]_{v,y_g} \right)\mbox{,}\label{l2}
\end{aligned}
\end{equation}
where $y_g$ is the ground-truth label of $v$ from the victim model.


\section{Cluster Attack with Graph-Dependent Priors}
\subsection{Cluster Attack}

The combinatorial optimization problem \eqref{overall_goal} is hard to solve owing to the non-Euclidean nature of the adjacent matrix and the complex structure of neural networks. To tackle this, we tried to find an equivalent combinatorial optimization problem and transform our GIA problem into a well-studied one. 
Here, we point out that every choice of adjacent matrix has an equivalent representation of a division of victim nodes into clusters. Thus, we get our key insight that this discrete optimization problem can be transformed into an equivalent graph clustering problem, 
which is a well-studied discrete optimization problem~\cite{schaeffer2007graph}.

\begin{proposition}[GIA/Graph Clustering Equivalence] Given graph $G$ and a set of nodes $\Phi_A \subseteq \Phi$, for a division of the victim nodes $\Phi_{\mathbf{A}}$ into $N_{fake}$ clusters $C=\{C_{1},C_{2},...,C_{N_{fake}}\}$, $\cup_{C_{i}\in C}C_{i}=\Phi_{\mathbf{A}}$, there exists a corresponding $\mathbf{B}$ and vice versa.\label{equivalence}
\end{proposition}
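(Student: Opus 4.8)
The plan is to exhibit an explicit two-way correspondence between the admissible choices of the injection block $\mathbf{B}$ and the labeled partitions of $\Phi_{\mathbf{A}}$ into $N_{fake}$ clusters, and then to verify that the two constructions are mutually inverse, which yields both existence directions at once. Recall from the threat model that $\mathbf{B}\in\{0,1\}^{N_{fake}\times|\Phi|}$ records, for each fake node (row $i$) and each original node (column $v$), whether an injected edge is present, and that the adversary capability imposes three restrictions: an edge may only join a fake node to a \emph{victim} node, no two fake nodes are joined, and each victim node receives exactly one new edge. Consequently every admissible $\mathbf{B}$ has support contained in the victim columns, and each such column contains exactly one entry equal to $1$. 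Throughout I read the word ``division'' in its usual clustering sense, i.e.\ the blocks $C_i$ are pairwise disjoint and their union is $\Phi_{\mathbf{A}}$.

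First I would define the forward map from a clustering to a connection matrix. Given a division $C=\{C_1,\dots,C_{N_{fake}}\}$, set
\[
\mathbf{B}_{i,v}=\mathbb{I}[\,v\in C_i\,]\ \text{ for }v\in\Phi_{\mathbf{A}},\qquad \mathbf{B}_{i,v}=0\ \text{ otherwise.}
\]
I then check that this $\mathbf{B}$ is admissible: since the clusters partition $\Phi_{\mathbf{A}}$, every victim column carries exactly one nonzero entry (each victim node gets exactly one new edge), while the nonvictim and fake columns vanish (no illegal edges and no fake--fake edges). This is precisely the threat model, so the forward map indeed lands in the admissible set.

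Next I would define the backward map from an admissible $\mathbf{B}$ to a clustering by reading off, for each fake node $i$, the victim nodes attached to it:
\[
C_i=\{\,v\in\Phi_{\mathbf{A}} : \mathbf{B}_{i,v}=1\,\}.
\]
Here the single-edge-per-victim constraint is exactly what forces the resulting sets to be pairwise disjoint and to cover $\Phi_{\mathbf{A}}$, so they form a genuine partition into $N_{fake}$ (possibly empty) labeled blocks. Composing the two maps in either order returns the original object, establishing that they are mutual inverses and hence that $\mathbf{B}$ and the clustering determine each other.

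The step warranting the most care is not any computation but making the constraint bookkeeping airtight: the equivalence holds \emph{because} of the ``one new edge per victim node'' restriction, which turns cluster membership into a well-defined function $\Phi_{\mathbf{A}}\to\{1,\dots,N_{fake}\}$; without it a victim node could lie in several clusters and the backward map would fail to return a partition. I would also be explicit that the correspondence is between \emph{labeled} clusters and indexed fake nodes, so that an empty cluster matches a fake node with no incident victim edge, and note that $\mathbf{A}_{fake}$ and $\mathbf{X}_{fake}$ are irrelevant here, since the proposition concerns only the bipartite connection pattern encoded by $\mathbf{B}$.
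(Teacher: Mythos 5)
Your proposal is correct and follows essentially the same route as the paper's proof: both construct the forward map $\mathbf{B}_{ij}=\mathbb{I}[v_j\in C_i]$ and the backward map $C_i=\{v_j:\mathbf{B}_{ij}=1\}$, with the ``one new edge per victim node'' constraint guaranteeing that the recovered clusters are disjoint and cover $\Phi_{\mathbf{A}}$. If anything, your write-up is slightly more careful than the paper's, since you explicitly verify that the two maps are mutual inverses and address empty clusters, whereas the paper instead spends a step checking the edge-budget count $\sum_{i,j}\mathbf{B}_{ij}=|\Phi_A|$.
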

\begin{proof}We provide a one-to-one mapping between cluster $C$ and adjacent matrix $\mathbf{B}$. Specifically, given $C=\{C_{1},C_{2},...,C_{N_{fake}}\}$, $\cup_{C_{i}\in C}C_{i}=\Phi_{\mathbf{A}}$ we get $\mathbf{B}$ from
\begin{equation}
\mathbf{B}_{ij}=\begin{cases} 
1,  & \mbox{if }v_j\in C_i\\
0, & \mbox{otherwise}\label{B_equation}
\end{cases}.
\end{equation}
We have
\begin{equation}
    \Delta_{edge} = \sum_i\sum_j\mathbf{B}_{ij}=\sum_i|C_i|=|\Phi_A|.
\end{equation}
Thus, we resultant $\mathbf{B}$ is valid in our setting.

Given $\mathbf{B}$ where $|\Phi_A| = \Delta_{edge} = \sum_i\sum_j\mathbf{B}_{ij}$, we derive the cluster $C$ from
\begin{equation}
    v_j\begin{cases} 
    \in C_i,  & \mbox{if  }\mathbf{B}_{ij}=1\\
    \notin C_i, & \mbox{if  }\mathbf{B}_{ij}=0\label{C_equation}
    \end{cases}.
\end{equation}
We have $\sum_i|C_i|=\sum_i\sum_j\mathbf{B}_{ij} = |\Phi_A|$.

In our setting, each victim node is connected to only one fake node, which indicates
\begin{equation}
    \sum_i \mathbf{B}_{ij} = 1,\,\,\forall\,v_j\in\Phi_A.
\end{equation}
In this case, each cluster gets disjoint with each other
\begin{equation}
    C_i\cap C_j=\emptyset,\,\,\forall\,1\leq i,j\leq N_{fake}.
\end{equation}
Then we get $\cup_{C_{i}\in C}C_{i}=\Phi_{\mathbf{A}}$ and $C$ is a valid division.
\end{proof}
Because $\textbf{A}_{fake}=\textbf{0}$ is fixed in our setting, we formulate our graph injection attack problem as an equivalent graph clustering problem. As a result, the non-trivial discrete optimization problem of the adjacency matrix can be solved in the context of graph clustering. The resulting cluster serves as a graph-dependent prior for adjacent matrix $\mathbf{B}$, which prevents inefficient searching in non-Euclidean discrete space.

For graph clustering, the main concern is the metric of the similarity between victim nodes. To investigate how a fake node will affect the performance on a certain node, we propose Adversarial Vulnerability as the similarity metric for graph clustering.
Adversarial Vulnerability of a victim node reflects its ``most vulnerable angle'' towards adversarial features of fake nodes, which is related only to the local structure of the graph and can be handled with only part of the graph observed. We have the insight that victim nodes sharing similar Adversarial Vulnerability are more likely to be affected simultaneously when they are connected to the same fake node.

\begin{definition}[Adversarial Vulnerability] For victim node $v\in \Phi$, its \textsl{Adversarial Vulnerability} is defined as
\begin{equation}
\mbox{AV}(v) = \underset{x_{u}}{\mbox{argmin}}\,\,\mathcal{L}(G^{+}),\label{av}
\end{equation}
where $x_u$ denotes the feature of fake node $u$ connected to node $v$. For the fake node itself, the Adversarial Vulnerability is defined as its own feature.
\end{definition}
Here, we adopt Euclid's distance as distance metric between victim nodes' Adversarial Vulnerability which is in Euclidean feature space.

\begin{definition}[Adversarial Distance Metric] $\forall v,u\in \Phi^+$, the \textsl{Adversarial Distance Metric} between node $v$ and $u$ is defined as $d(v,u)=||\mbox{AV}(v)-\mbox{AV}(u)||_{2}^{2}$.
\end{definition}
After the Adversarial Vulnerability is computed, the objective of the cluster algorithm is to minimize the following cluster distance as
\begin{equation}
\min_{C}\sum_{C_{i}\in C}\sum_{v\in C_{i}} d(v,c_{i})\label{cluster_loss},
\end{equation}
where the cluster center $c_i$ is the corresponding fake node of cluster $C_i$, and
\begin{equation}
    \mbox{AV}(c_{i})=\frac{1}{|C_{i}|}\sum_{v\in C_{i}}\mbox{AV}(v).\label{cluster_center}
\end{equation}

\subsection{Optimization}


To approximate Adversarial Vulnerability, we adopt zeroth-order optimization~\cite{10.1145/3128572.3140448}, which is similar to query-based attacks on image classification, to better utilize limited queries. For graph with discrete features, the optimization of Eq. (\ref{av}) can be 
\begin{equation}
\begin{aligned}
\label{discrete}&\Delta \mathbf{X}_{fake} = \\
&\mathbb{I}\left( \mathcal{L}(\mathbf{A}^+, \mathbf{X}^+)>\mathcal{L}\left( \mathbf{A}^+, \begin{bmatrix}\mathbf{X}\\\mathbf{X}_{fake}+\delta \mathbf{X}_{ij}\end{bmatrix} \right) \right)\cdot\delta \mathbf{X}_{ij}\mbox{,}
\end{aligned}
\end{equation}
where $\delta \mathbf{X}_{ij}$ denotes the tentative perturbation in dimension $j$ of a feature of the $i$th fake node and $\mathbb{I}(\cdot)$ is the indicator function. A tentative perturbation is adopted only if it diminishes the adversarial loss. For continuous feature space, we adopt NES~\cite{ilyas2018blackbox} for gradient estimation as 
\begin{equation}
\begin{aligned}
    \label{continuous}&\nabla_{\mathbf{X}_{fake}}\mathbb{E}\,\mathcal{L}(\mathbf{A}^+,\mathbf{X}^+) = \\ &\frac{1}{\sigma n}\sum_{i=1}^{n}Z_i\mathcal{L}\left( \mathbf{A}^+,\begin{bmatrix}\mathbf{X}\\\mathbf{X}_{fake}+\sigma Z_i\end{bmatrix} \right)\mbox{,}     
\end{aligned}
\end{equation}
where $\sigma>0$ is the standard variance, $n$ is the size of the NES population and $Z_i\sim \mathcal{N}(\mathbf{0},\mathbf{I}_{N_{fake}\times D})$ is the perturbation of $\mathbf{X}_{fake}$. After gradient estimation, gradient-based optimization methods can be adopted. Here, we use Projected Gradient Descent (PGD) \cite{madry2017towards} to optimize $\mathbf{X}_{fake}$.

Our method is outlined in Figure \ref{fig: Cluster Attack}. With the resultant Adversarial Vulnerability, we solve the optimization of Eq. (\ref{cluster_loss}) by K-Means clustering. After that, the features of fake nodes, initialized as the cluster center in Eq.~\eqref{cluster_center}, are optimized using Eq. (\ref{discrete}) and Eq. (\ref{continuous}). More details of our algorithm are deferred to the appendix. 

\subsection{Theoretical Analysis}

Connecting fake nodes to an original graph brings a victim node (1) one 1-hop neighbor (the fake node connected to it); (2) neighbors at a farther distance connected through this fake node; (3) fake nodes connected to other victim nodes which are at least 2-hop away. It is noted that 1-hop neighbors are often dominant. Here we leave out the influence of farther neighbors caused by the fake node and fake nodes connected to other victim nodes which are at least 2-hop or even farther. The loss function over the $i$th victim node $v_i\in \Phi_A$ in Eq. (\ref{overall_goal}) can thus be seen as a function of fake nodes' features (here we set a trade-off parameter $\lambda=0$ for analysis). We have
\begin{equation}
    \mathcal{L}(G)=\sum_{v_i\in \Phi_A} l(G^{+},v_i)= \sum_{i=1}^{|\Phi_A|} l_i(x_i)\label{divide}\mbox{,}
\end{equation}
where $x_i$ denotes fake nodes' features connected to victim node $v_i$. Theoretically, we provide our bounds under certain smooth conditions which hold for numerous neural networks.
\begin{definition}[W-condition] We say that a function $\mathcal{L}(G)=\sum_{i=1}^{|\Phi_A|}l_i(x_i)$ satisfies the  \textsl{W-condition}, if and only if $\forall\,1\leq i\leq |\Phi_A|$, $l_i(\cdot)$ satisfies the Lipschitz condition of order 2. In this case, we have 
\begin{equation}
    m_i||x_i-x_i^{*}||_2^2\leq l_i(x_i)-l_i(x_i^*)\leq M_i||x_i-x_i^{*}||_2^2\mbox{,}\label{wcondition}
\end{equation}
where $0\leq m_i\leq M_i$ are constants and $1\leq i\leq |\Phi_A|$. $x_{i}^{*}$ is the minimum of $l_i(\cdot)$.
\end{definition}
Note that $M$ exists because the loss function satisfies the Lipschitz condition of order 2 under W-condition; and it also includes $m$ because $m=0$ always satisfies Eq. (\ref{wcondition}). Under W-condition, we derive our bounds on the difference in adversarial loss between our results and optimal adversarial examples.
\begin{proposition}\label{pro:bound}
If $\mathcal{L}(\cdot)$ in Eq. (\ref{divide}) satisfies the W-condition, $G^{m}$ is the optimal choice of Eq. (\ref{overall0}) and $G^{'}$ is the optimal given by the Cluster Attack of Eq. (\ref{cluster_loss}). Then, we have
\begin{equation}
    \mathcal{L}(G^{'})-\mathcal{L}(G^{m})\leq |M-m|\min_{C}\sum_{C_{i}\in C}\sum_{v\in C_{i}} d(v,c_{i})\mbox{,}\label{bound}
\end{equation}
where $M=\max_{1\leq i\leq |\Phi_A|}M_i$ and $m=\min_{1\leq i\leq |\Phi_A|}m_i$.
\end{proposition}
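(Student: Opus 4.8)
The plan is to reduce everything to a clean comparison against the per-node optima $x_i^*$, exploiting that the Adversarial Vulnerability is exactly that optimum. Under the decomposition \eqref{divide}, a fake node connected only to $v_i$ contributes solely $l_i(\cdot)$ to the loss, so the definition \eqref{av} collapses to $\mbox{AV}(v_i) = \arg\min_{x} l_i(x) = x_i^*$. Since the AV of a fake node equals its own feature, for any clustering $C$ with centroid features $c_k$ and induced assignment $\pi$ the clustering objective \eqref{cluster_loss} rewrites as $\sum_{C_k}\sum_{v_i\in C_k} d(v_i,c_k) = \sum_i \|x_i^* - c_{\pi(i)}\|_2^2$. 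I would abbreviate the optimal K-means value on the right-hand side of the bound as $W^{*} = \min_{C}\sum_{C_i\in C}\sum_{v\in C_i} d(v,c_i)$, which is precisely the value realized by $G'$.

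First I would upper-bound $\mathcal{L}(G')$. Because $G'$ connects each $v_i$ to its cluster centroid $c_{\pi'(i)}$, the right inequality of the W-condition \eqref{wcondition} gives, termwise, $l_i(c_{\pi'(i)}) - l_i(x_i^*) \le M_i\,\|c_{\pi'(i)} - x_i^*\|_2^2 \le M\, d(v_i,c_{\pi'(i)})$. Summing over $i\in\Phi_A$ and using that $G'$ attains the optimal clustering cost $W^{*}$ yields $\mathcal{L}(G') \le \sum_i l_i(x_i^*) + M\,W^{*}$.

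Next I would lower-bound $\mathcal{L}(G^{m})$. Let $\pi^{m}$ and fake features $\{z_k^{m}\}$ be the assignment and features of the optimal attack in \eqref{overall0}. The left inequality of \eqref{wcondition} gives $l_i(z^{m}_{\pi^{m}(i)}) - l_i(x_i^*) \ge m_i\,\|z^{m}_{\pi^{m}(i)} - x_i^*\|_2^2 \ge m\,\|z^{m}_{\pi^{m}(i)} - x_i^*\|_2^2$. The crucial step is $\sum_i \|z^{m}_{\pi^{m}(i)} - x_i^*\|_2^2 \ge W^{*}$: for the partition induced by $\pi^{m}$, replacing the arbitrary centers $z_k^{m}$ by the within-cluster centroids can only decrease the sum of squared distances, and that centroid cost is in turn at least the global minimum $W^{*}$ taken over all feasible clusterings into at most $N_{fake}$ clusters, of which $G^m$'s partition is one. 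Hence $\mathcal{L}(G^{m}) \ge \sum_i l_i(x_i^*) + m\,W^{*}$.

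Subtracting the two estimates cancels the common term $\sum_i l_i(x_i^*)$ and, using $m\le M$, gives $\mathcal{L}(G') - \mathcal{L}(G^{m}) \le (M-m)\,W^{*} = |M-m|\,\min_{C}\sum_{C_i\in C}\sum_{v\in C_i} d(v,c_i)$, which is \eqref{bound}. I expect the main obstacle to be the lower bound on $\mathcal{L}(G^{m})$: one must argue that the \emph{optimal} injection attack—whose fake features need not be centroids and whose partition is a priori unknown—still induces a clustering cost no smaller than the K-means optimum $W^{*}$, which is exactly where centroid-optimality for a fixed partition and feasibility of $G^m$'s partition both enter. The termwise application of the W-condition and the final subtraction are otherwise routine.
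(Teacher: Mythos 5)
Your proof is correct and follows essentially the same route as the paper's: both center the comparison at the per-node optima $x_i^{*}=\mathrm{AV}(v_i)$, apply the two sides of the W-condition termwise with $M$ and $m$, identify the cluster attack's cost with the K-means optimum $W^{*}$, and lower-bound the optimal attack's cost by $W^{*}$ via feasibility of its induced partition. The only differences are organizational (you prove two one-sided bounds and subtract, where the paper runs a single chain of inequalities on the difference) and that you spell out the centroid-optimality argument for the crucial inequality $\sum_i \|x_i^{m}-x_i^{*}\|_2^2 \geq W^{*}$, which the paper asserts without justification.
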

\begin{proof}
We have
\begin{equation}
\begin{aligned}
    &\mathcal{L}(G^{'})-\mathcal{L}(G^{m})\\ =&\sum_{i=1}^{|\Phi_A|}l_i(x_i')-\sum_{i=1}^{|\Phi_A|}l_i(x_i^m)\\
    =&\sum_{i=1}^{|\Phi_A|}(l_i(x_i')-l_i(x_i^*))-\sum_{i=1}^{|\Phi_A|}(l_i(x_i^m)-l_i(x_i^*))\\
    \leq& \sum_{i=1}^{|\Phi_A|}M_i ||x_i'-x_i^*||_2^2-\sum_{i=1}^{|\Phi_A|}m_i ||x_i^m-x_i^*||_2^2\\
    \leq& M\sum_{i=1}^{|\Phi_A|}||x_i'-x_i^*||_2^2-m\sum_{i=1}^{|\Phi_A|}||x_i^m-x_i^*||_2^2\\
    =&M\min_{C}\sum_{C_{i}\in C}\sum_{v\in C_{i}} d(v,c_{i})-m\sum_{i=1}^{|\Phi_A|}||x_i^m-x_i^*||_2^2\\
    \leq&M\min_{C}\sum_{C_{i}\in C}\sum_{v\in C_{i}} d(v,c_{i})-m\min_{C}\sum_{C_{i}\in C}\sum_{v\in C_{i}} d(v,c_{i})\\
    =&|M-m|\min_{C}\sum_{C_{i}\in C}\sum_{v\in C_{i}} d(v,c_{i}),
\end{aligned}
\end{equation}
where $x_i^{m}$ and $x_i'$ are features of the fake node connected to $i$th victim node provided by Eq. (\ref{overall0}) and Cluster Attack of Eq. (\ref{cluster_loss}), respectively.
\end{proof}
Proposition \ref{pro:bound} indicates that the difference in adversarial loss between our results and optimal adversarial examples is bounded by the minimal cluster distance in Eq. \eqref{cluster_loss} and how each $l_i(\cdot)$ is linear to $||x_i-x_i^*||_2^2$.


\section{Experiments}

\subsection{Experimental Setup}

\paragraph{Dataset.} We do our experiments on Cora and Citeseer \cite{Sen_Namata_Bilgic_Getoor_Galligher_Eliassi-Rad_2008}, which are two benchmark small citation networks with discrete node features, and on Reddit~\cite{hamilton2017inductive} and ogbn-arxiv~\cite{hu2020open}, which are two large networks with continuous node features. The statistics of the datasets are shown in Table \ref{dataset}.
\begin{table}[tb]
	\centering
	\begin{tabular}{lllll}
		\toprule
		Name       & Nodes  & Edges    &Features    &Classes \\
		\midrule
		Cora      & 2708   & 5429     &1433        &7       \\
		Citeseer   & 3327   & 4732     &3702        &6       \\
		Reddit     & 232965 & 11606919 &602         &41      \\
		ogbn-arxiv & 169343 & 1157799  &128         &40      \\
		\bottomrule
	\end{tabular}
	\caption{Statistics of the datasets.}
	\label{dataset}
\end{table}

\begin{table*}[tbhp]
	\centering
	\begin{tabular}{l|llll|llll}
		\toprule
		\multirow{2}{*}{Method} &
        \multicolumn{4}{c|}{Cora}&
        \multicolumn{4}{c}{Citeseer} \\
        & $T=3$ & $T=5$ & $T=7$ & $T=10$ &$T=3$ & $T=5$ & $T=7$ & $T=10$\\ 
		\midrule
		Random  &0.07&0.08&0.04&0.05&0.04&0.02&0.03&0.03\\
		NETTACK &0.61&0.57&0.55&0.53&0.75&0.71&0.66&0.61\\
		NETTACK - Sequential &0.68&0.73&0.72&0.70&0.76&0.74&0.72&0.67\\
		Fake Node Attack &0.61&0.58&0.54&0.52&0.76&0.68&0.62&0.60\\
		G-NIA &-&-&-&-&0.86&0.76&0.70&0.65\\
		\midrule
		Cluster Attack &\textbf{0.99}&\textbf{0.93}&\textbf{0.84}&\textbf{0.72}&\textbf{1.00}&\textbf{0.89}&\textbf{0.80}&\textbf{0.70}\\
		\bottomrule
	\end{tabular}
	\caption{Success rates of Cluster Attack along with other baselines with discrete feature space. $T$ denotes number of victim nodes.
	}
	\label{exp-result}
\end{table*}

\paragraph{Parameters.} For each experimental setting, we run the experiment for 100 times and report the average results. In each round, we randomly sample $|\Phi_{A}|$ nodes as victim nodes. 
We set $k=1$ in $\mathcal{N}_{k}(\Phi_{A})$, which means we aim to protect the $1$-hop neighbors of victim nodes. Without specification, we compare our method with baselines with a trade-off parameter set as $\lambda=0$ in Eq.~\eqref{overall_goal}.

\paragraph{Comparison Methods.} Since this study is the first to perform query-based injection attack on node classification, most of the previous baselines on graph injection attacks cannot be easily adapted to our problem. We include the following baselines:

\textbf{Random Attack}, which decides the fake nodes' features and connections between fake nodes and original nodes randomly.

\textbf{NETTACK}, one of the most effective attacks by first adding several nodes and then adding many edges between the fake nodes and original nodes~\cite{Z_gner_2018}.

\textbf{NETTACK - Sequential}, which is a variant of NETTACK~\cite{Z_gner_2018} by sequentially adding fake nodes.

\textbf{Fake Node Attack}, which adds fake nodes in a white-box attack scenario~\cite{wang2018attack}.

\textbf{G-NIA}, a white-box graph injection attack~\cite{gnia}. We refer to the reported results on Citeseer.

\textbf{TDGIA}, a black-box GIA method with superior performance to all the baselines in KDD Cup 2020\footnote{https://www.kdd.org/kdd2020/kdd-cup} of graph injection attack. We mainly compare our method with this method. Note that TDGIA is not query-based~\cite{tdgia}.

Among the above baselines, TDGIA is performed in a continuous feature space while NETTACK and Fake Node Attack are performed in discrete feature spaces.

\subsection{Quantitative Evaluation}
Without loss of generality, we uniformly set $N_{fake}=4$ and let the number of victim nodes vary to see the performance under different $N_{fake}:|\Phi_{A}|$.
\subsubsection{\textbf{Performance on Small Datasets with Discrete Features}}
We first evaluate the performance of the Cluster Attack along with other baselines on Cora and Citeseer with discrete features. The number of queries $K$ is set to $K=|\Phi_{A}|\cdot K_{t}+N_{fake}\cdot K_{f}$, where $K_{t}=K_{f}=D$ (feature dimension). The results are shown in Table \ref{exp-result}. Our algorithm outperforms all baselines in terms of success rates. This is because our method prevents inefficiently searching the non-Euclidean space of the adjacent matrix and better utilizes the limited queries in searching the Euclidean feature space. The results also demonstrate that the Adversarial Vulnerability is a good metric for clustering the victim nodes.

\subsubsection{\textbf{Performance on Large Datasets with Continuous Features}}
In this section, we evaluate the performance of the Cluster Attack on Reddit (with $1500{|\Phi_A|}+750N_{fake}$ queries) and obgn-arxiv (with $4000{|\Phi_A|}+2000N_{fake}$ queries), two large networks with continuous feature whose victim nodes have a higher average degree. We compare our method with the state-of-the-art method which has superior performance to other baselines. In these challenging datasets, we perform an untargeted attack, which means attacker successes when the predicted labels of victim nodes are changed. The results are shown in Table \ref{continuous-result}. Our algorithm outperforms the baseline in terms of success rates. This is because, with the cluster prior of the adjacent matrix, our Cluster Attack prevents inefficiently searching in non-Euclidean space and make the best use of the limited queries to search the Euclidean feature space. Another reason is because of the good metric of the Adversarial Vulnerability, which provides an appropriate cluster prior.
\begin{table}[tbhp]

	\centering
	\begin{tabular}{l|ll|ll}
		\toprule
		\multirow{2}{*}{Method} &
        \multicolumn{2}{c|}{ogbn-arxiv}&
        \multicolumn{2}{c}{Reddit} \\
        & $T=12$ & $T=16$ &$T=12$ & $T=16$\\ 
		\midrule
		TDGIA &0.45&0.38&0.09&0.07\\
		\midrule
		Cluster Attack &\textbf{0.67}&\textbf{0.59}&\textbf{0.15}&\textbf{0.12}\\
		\bottomrule
	\end{tabular}
	\caption{Success rates of Cluster Attack along with other baseline with continuous features. $T$ denotes number of victim nodes.}
	\label{continuous-result}
\end{table}

\subsection{Ablation Study}
\subsubsection{\textbf{Performance with Different Trade-Off Parameters $\lambda$}}
In this section, we examine the performance of Cluster Attack with different trade-off parameters $\lambda$ between fake nodes and protected nodes in the Cora dataset. We uniformly set $N_{fake}=4$, $|\Phi_{A}|=10$. We choose two competitive baselines and adapt their loss functions to our trade-off format. The results are shown in Figure \ref{lambda}. It can be seen from Figure \ref{lambda} that, when $\lambda$ increases (which means that we pay more attention to the protected nodes), the percentage of protected nodes whose labels remain unchanged during the attack also increases. This is because we try to protect the labels of the protected nodes from being changed in a trade-off formulation in our loss function, Eq. (\ref{overall_goal}). Also, our trade-off formulation can be generalized to other baselines, as shown in Figure \ref{lambda}. This is because we design our loss function in Eq.~(\ref{overall_goal}) in a generalizable manner independent of attack method.
\begin{figure}[bhtp]
\centering
\subfigure[Success rates of attack]{
\begin{minipage}[t]{0.48\linewidth}
\centering
\includegraphics[width=1.5in]{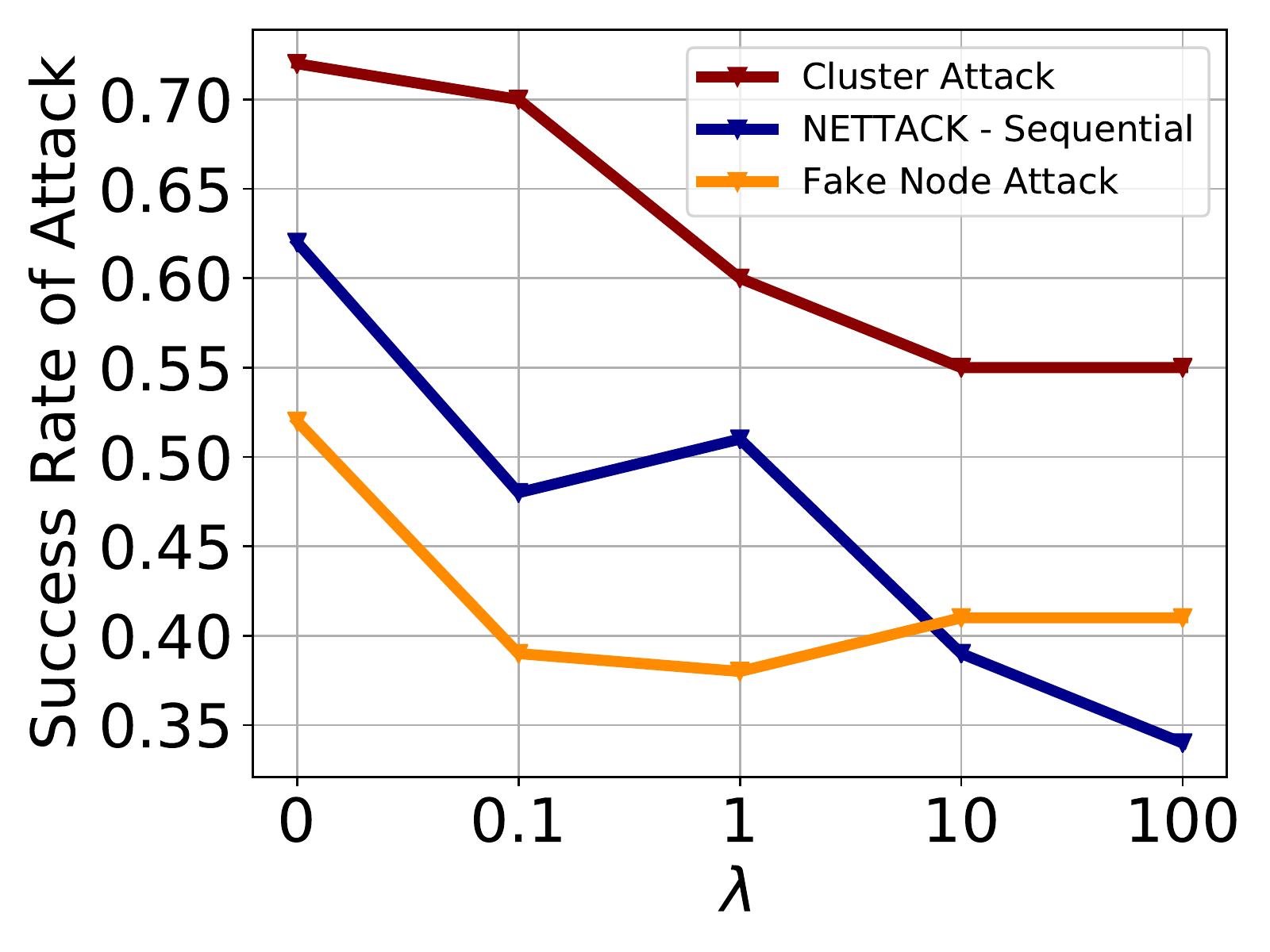}
\end{minipage}%
}%
\subfigure[Percentage of unchanged protected nodes
]{
\begin{minipage}[t]{0.48\linewidth}
\centering
\includegraphics[width=1.5in]{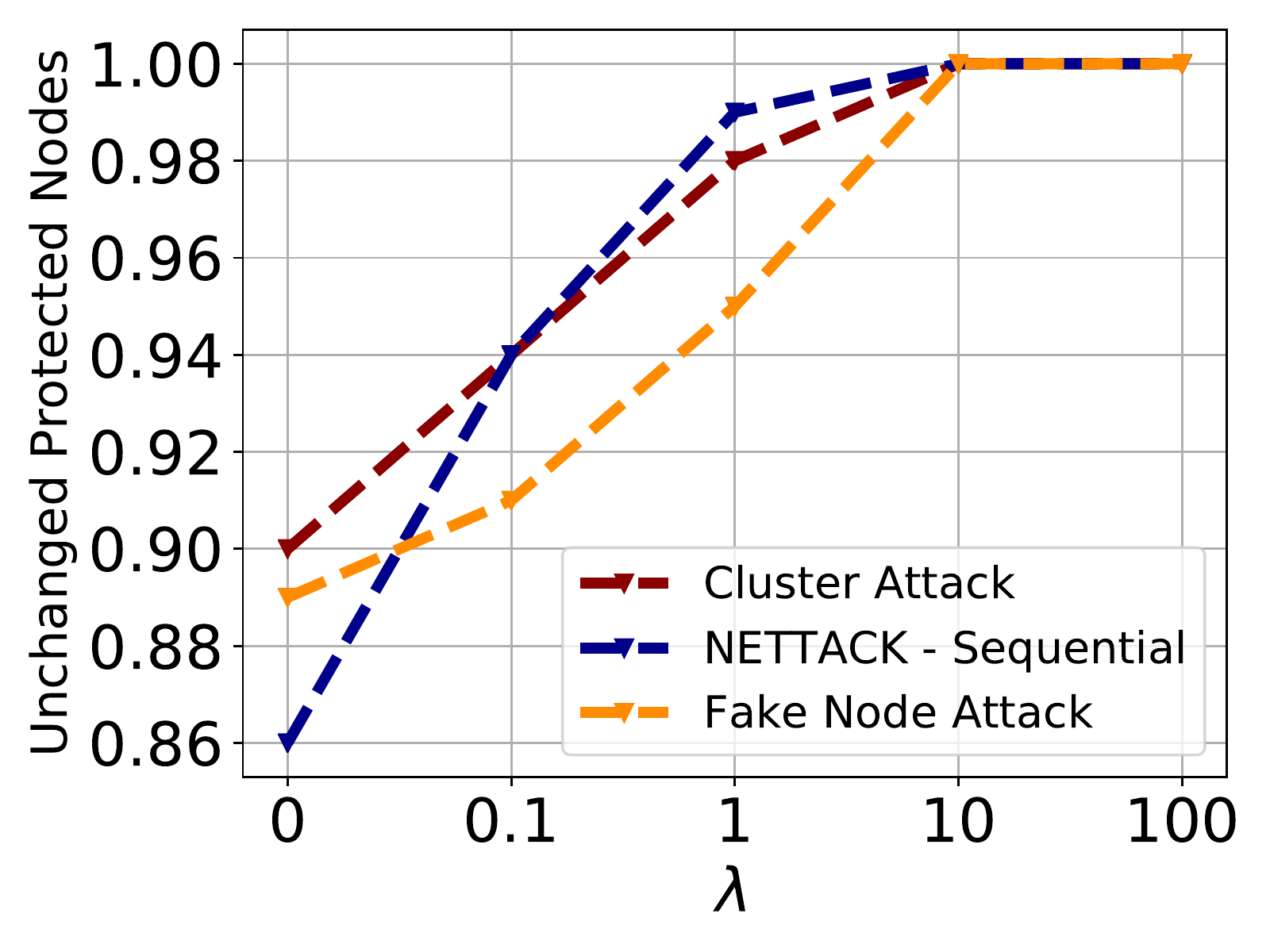}
\end{minipage}
}%
\centering
\caption{Cluster Attack in Cora with different $\lambda$.}
\label{lambda}
\end{figure}

\subsubsection{\textbf{Performance with Different Number of Queries}}
In this section, we examine the performance of Cluster Attack with a different number of queries. We set $K_{t}=K_{f}=\alpha \cdot D$ and examine the performance under different $\alpha$ in Cora and Citeseer dataset. We uniformly set $N_{fake}=4$, $|\Phi_{A}|=10$ with $\lambda=0$ and $\lambda=1$. The results are shown in Figure~\ref{query-result}. The success rate of Cluster Attack drops as the number of queries drops. Our algorithm still performs well when the number of queries drops slightly, especially when $\alpha\ge 0.4$. This demonstrates that our Cluster Attack can work in a query-efficient manner. This is because cluster algorithm provides graph-dependent priors for the adjacent matrix and thus prevents inefficient searching. Searching in the Euclidean feature space is more efficient.
\begin{figure}[bhtp]
	\centering
	\includegraphics[width=1.8in]{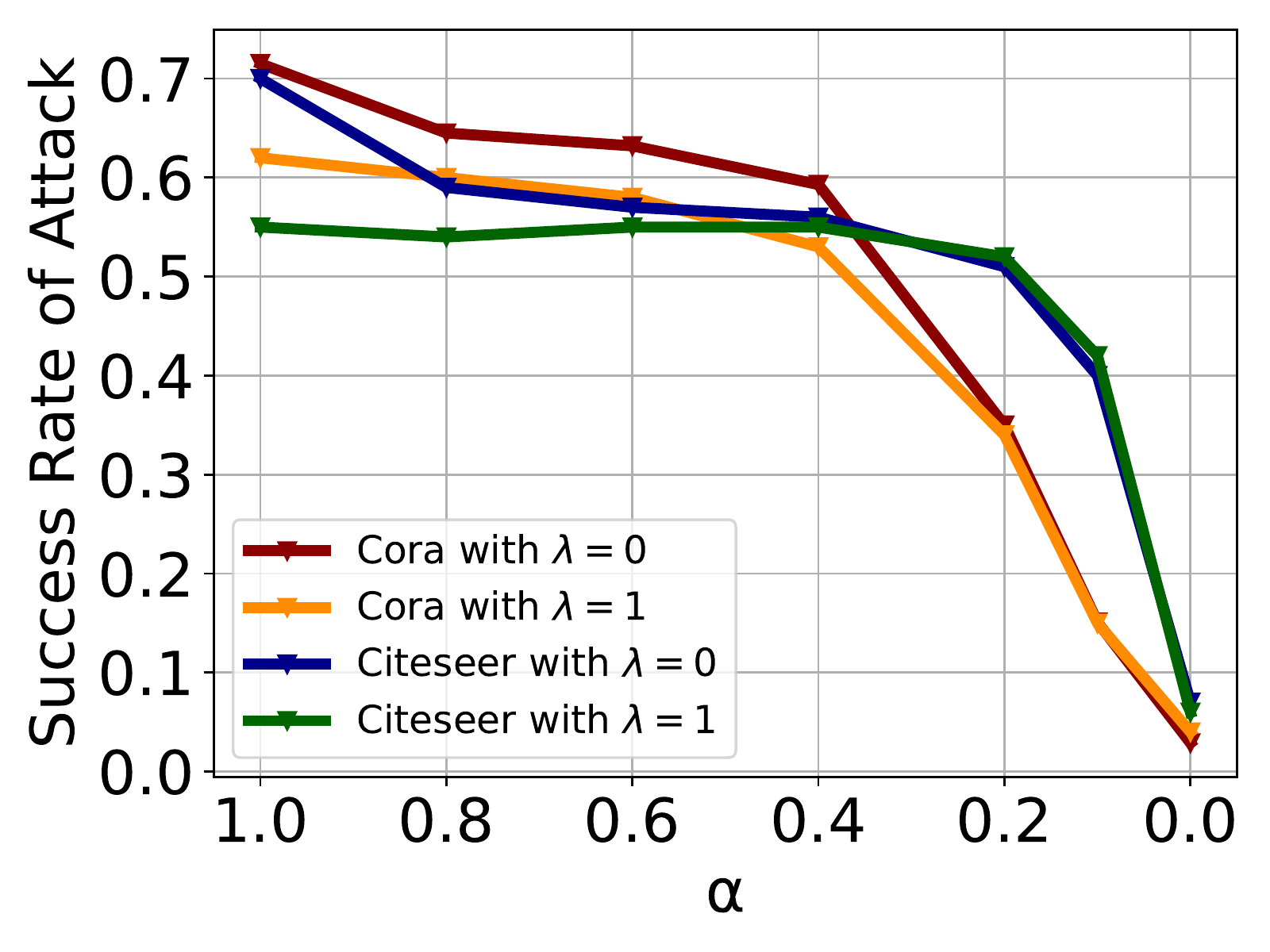}
	\caption{Success rates of Cluster Attack with different number of queries in Cora and Citeseer.}
	\label{query-result}
\end{figure}


We provide additional experiments in the appendix. The experiments show that nodes with a lower degree are more likely to get misclassified under attack. Also, when the number of fake nodes increases, the success rate of the attack increases too, which is consistent with our intuitive understanding. We provide an ablation study on the cluster metric of Adversarial Vulnerability. We find that original Cluster Attack performs better than Cluster Attack without Adversarial Vulnerability, i.e., the victim nodes' Adversarial Vulnerabilities are randomly set. This result demonstrates the effectiveness of our Adversarial Vulnerability.
\section{Conclusion}
In this paper, we provide a unified framework for query-based adversarial attacks on graphs. Under the framework, we propose Cluster Attack, a query-based black-box graph injection attack with partial information. We demonstrate that a graph injection attack can be formulated as an equivalent clustering problem. 
The difficult discrete optimization problem of the adjacent matrix can thus be solved in the context of clustering. After providing theoretical bounds on our method, we empirically show that our method has strong performance in terms of the success rate of attacking.

\section*{Ethical Statement}
The safety and robustness of AI are attracting more and more attention. In this work, we propose a method of adversarial attack. We hope our work reveals the potential weakness of current graph neural networks to some extent, and more importantly inspires future work to develop more robust graph neural networks.
\section*{Acknowledgments}
This work was supported by the National Key Research and Development Program of China (Nos. 2020AAA0104304, 2017YFA0700904), NSFC Projects (Nos. 62061136001, 61621136008, 62076147, U19B2034, U19A2081, U1811461), the major key project of PCL (No. PCL2021A12), 
Tsinghua-Alibaba Joint Research Program,  Tsinghua-OPPO Joint Research Center, and the High Performance Computing Center, Tsinghua University.

\bibliography{example_paper}

\begin{thebibliography}{}

\bibitem[\protect\citeauthoryear{Carlini and
  Wagner}{2017}]{carlini2016evaluating}
Nicholas Carlini and David Wagner.
\newblock Towards evaluating the robustness of neural networks.
\newblock In {\em 2017 ieee symposium on security and privacy (sp)}, pages
  39--57. IEEE, 2017.

\bibitem[\protect\citeauthoryear{Chen \bgroup \em et al.\egroup
  }{2017}]{10.1145/3128572.3140448}
Pin-Yu Chen, Huan Zhang, Yash Sharma, Jinfeng Yi, and Cho-Jui Hsieh.
\newblock Zoo: Zeroth order optimization based black-box attacks to deep neural
  networks without training substitute models.
\newblock In {\em Proceedings of the 10th ACM Workshop on Artificial
  Intelligence and Security}, AISec ’17, page 15–26, New York, NY, USA,
  2017. Association for Computing Machinery.

\bibitem[\protect\citeauthoryear{Chen \bgroup \em et al.\egroup
  }{2019}]{chen2019semi}
Weijian Chen, Yulong Gu, Zhaochun Ren, Xiangnan He, Hongtao Xie, Tong Guo,
  Dawei Yin, and Yongdong Zhang.
\newblock Semi-supervised user profiling with heterogeneous graph attention
  networks.
\newblock In {\em IJCAI}, volume~19, pages 2116--2122, 2019.

\bibitem[\protect\citeauthoryear{Dai \bgroup \em et al.\egroup
  }{2018}]{dai2018adversarial}
Hanjun Dai, Hui Li, Tian Tian, Xin Huang, Lin Wang, Jun Zhu, and Le~Song.
\newblock Adversarial attack on graph structured data.
\newblock In {\em International conference on machine learning}, pages
  1115--1124. PMLR, 2018.

\bibitem[\protect\citeauthoryear{Dong \bgroup \em et al.\egroup
  }{2018}]{Dong-2018}
Yinpeng Dong, Fangzhou Liao, Tianyu Pang, Hang Su, Jun Zhu, Xiaolin Hu, and
  Jianguo Li.
\newblock Boosting adversarial attacks with momentum.
\newblock In {\em Proceedings of the IEEE Conference on Computer Vision and
  Pattern Recognition (CVPR)}, 2018.

\bibitem[\protect\citeauthoryear{Hamilton \bgroup \em et al.\egroup
  }{2017}]{hamilton2017inductive}
Will Hamilton, Zhitao Ying, and Jure Leskovec.
\newblock Inductive representation learning on large graphs.
\newblock {\em Advances in neural information processing systems}, 30, 2017.

\bibitem[\protect\citeauthoryear{Hu \bgroup \em et al.\egroup
  }{2020}]{hu2020open}
Weihua Hu, Matthias Fey, Marinka Zitnik, Yuxiao Dong, Hongyu Ren, Bowen Liu,
  Michele Catasta, and Jure Leskovec.
\newblock Open graph benchmark: Datasets for machine learning on graphs.
\newblock {\em Advances in neural information processing systems},
  33:22118--22133, 2020.

\bibitem[\protect\citeauthoryear{Ilyas \bgroup \em et al.\egroup
  }{2018}]{ilyas2018blackbox}
Andrew Ilyas, Logan Engstrom, Anish Athalye, and Jessy Lin.
\newblock Black-box adversarial attacks with limited queries and information.
\newblock In {\em Proceedings of the 35th International Conference on Machine
  Learning, {ICML} 2018}, July 2018.

\bibitem[\protect\citeauthoryear{Kipf and
  Welling}{2017}]{kipf2016semisupervised}
Thomas~N. Kipf and Max Welling.
\newblock {Semi-Supervised Classification with Graph Convolutional Networks}.
\newblock In {\em Proceedings of the 5th International Conference on Learning
  Representations}, ICLR '17, 2017.

\bibitem[\protect\citeauthoryear{Ma \bgroup \em et al.\egroup
  }{2021}]{10.1145/3447548.3467416}
Yao Ma, Suhang Wang, Tyler Derr, Lingfei Wu, and Jiliang Tang.
\newblock Graph adversarial attack via rewiring.
\newblock In {\em Proceedings of the 27th ACM SIGKDD Conference on Knowledge
  Discovery \& Data Mining}, KDD '21, page 1161–1169, New York, NY, USA,
  2021. Association for Computing Machinery.

\bibitem[\protect\citeauthoryear{{Madry} \bgroup \em et al.\egroup
  }{2018}]{madry2017towards}
Aleksander {Madry}, Aleksandar {Makelov}, Ludwig {Schmidt}, Dimitris {Tsipras},
  and Adrian {Vladu}.
\newblock Towards deep learning models resistant to adversarial attacks.
\newblock In {\em ICLR}, 2018.

\bibitem[\protect\citeauthoryear{Mu \bgroup \em et al.\egroup
  }{2021}]{10.1145/3460120.3484796}
Jiaming Mu, Binghui Wang, Qi~Li, Kun Sun, Mingwei Xu, and Zhuotao Liu.
\newblock A hard label black-box adversarial attack against graph neural
  networks.
\newblock In {\em Proceedings of the 2021 ACM SIGSAC Conference on Computer and
  Communications Security}, CCS '21, page 108–125, New York, NY, USA, 2021.
  Association for Computing Machinery.

\bibitem[\protect\citeauthoryear{Qiu \bgroup \em et al.\egroup
  }{2018}]{qiu2018deepinf}
Jiezhong Qiu, Jian Tang, Hao Ma, Yuxiao Dong, Kuansan Wang, and Jie Tang.
\newblock Deepinf: Social influence prediction with deep learning.
\newblock In {\em Proceedings of the 24th ACM SIGKDD International Conference
  on Knowledge Discovery \& Data Mining}, pages 2110--2119, 2018.

\bibitem[\protect\citeauthoryear{Schaeffer}{2007}]{schaeffer2007graph}
Satu~Elisa Schaeffer.
\newblock Graph clustering.
\newblock {\em Computer science review}, 1(1):27--64, 2007.

\bibitem[\protect\citeauthoryear{Sen \bgroup \em et al.\egroup
  }{2008}]{Sen_Namata_Bilgic_Getoor_Galligher_Eliassi-Rad_2008}
Prithviraj Sen, Galileo Namata, Mustafa Bilgic, Lise Getoor, Brian Galligher,
  and Tina Eliassi-Rad.
\newblock Collective classification in network data.
\newblock {\em AI Magazine}, 29(3):93, Sep. 2008.

\bibitem[\protect\citeauthoryear{Tao \bgroup \em et al.\egroup }{2021}]{gnia}
Shuchang Tao, Qi~Cao, Huawei Shen, Junjie Huang, Yunfan Wu, and Xueqi Cheng.
\newblock Single node injection attack against graph neural networks.
\newblock {\em Proceedings of the 30th ACM International Conference on
  Information and Knowledge Management}, Oct 2021.

\bibitem[\protect\citeauthoryear{Veličković \bgroup \em et al.\egroup
  }{2017}]{velikovi2017graph}
Petar Veličković, Guillem Cucurull, Arantxa Casanova, Adriana Romero, Pietro
  Liò, and Yoshua Bengio.
\newblock Graph attention networks.
\newblock In {\em ICLR 18}, 2017.

\bibitem[\protect\citeauthoryear{Wan \bgroup \em et al.\egroup
  }{2021}]{wan2021adversarial}
Xingchen Wan, Henry Kenlay, Robin Ru, Arno Blaas, Michael~A Osborne, and
  Xiaowen Dong.
\newblock Adversarial attacks on graph classifiers via bayesian optimisation.
\newblock In {\em Advances in Neural Information Processing Systems},
  volume~34, pages 6983--6996, 2021.

\bibitem[\protect\citeauthoryear{Wang \bgroup \em et al.\egroup
  }{2018}]{wang2018attack}
Xiaoyun Wang, Minhao Cheng, Joe Eaton, Cho-Jui Hsieh, and Felix Wu.
\newblock Attack graph convolutional networks by adding fake nodes.
\newblock {\em arXiv preprint arXiv:1810.10751}, 2018.

\bibitem[\protect\citeauthoryear{Yang and Long}{2021}]{yang2021derivative}
Runze Yang and Teng Long.
\newblock Derivative-free optimization adversarial attacks for graph
  convolutional networks.
\newblock {\em PeerJ Computer Science}, 7:e693, 2021.

\bibitem[\protect\citeauthoryear{Ying \bgroup \em et al.\egroup
  }{2018}]{ying2018graph}
Rex Ying, Ruining He, Kaifeng Chen, Pong Eksombatchai, William~L Hamilton, and
  Jure Leskovec.
\newblock Graph convolutional neural networks for web-scale recommender
  systems.
\newblock In {\em Proceedings of the 24th ACM SIGKDD International Conference
  on Knowledge Discovery \& Data Mining}, pages 974--983, 2018.

\bibitem[\protect\citeauthoryear{Zou \bgroup \em et al.\egroup }{2021}]{tdgia}
Xu~Zou, Qinkai Zheng, Yuxiao Dong, Xinyu Guan, Evgeny Kharlamov, Jialiang Lu,
  and Jie Tang.
\newblock Tdgia: Effective injection attacks on graph neural networks.
\newblock {\em Proceedings of the 27th ACM SIGKDD Conference on Knowledge
  Discovery and Data Mining}, Aug 2021.

\bibitem[\protect\citeauthoryear{Zügner \bgroup \em et al.\egroup
  }{2018}]{Z_gner_2018}
Daniel Zügner, Amir Akbarnejad, and Stephan Günnemann.
\newblock Adversarial attacks on neural networks for graph data.
\newblock {\em Proceedings of the 24th ACM SIGKDD International Conference on
  Knowledge Discovery \& Data Mining - KDD ’18}, 2018.

\end{thebibliography}
\bibliographystyle{named}

\appendix
\clearpage

\section{Details of Cluster Attack}
We provide details on how to approximate Adversarial Vulnerability with limited queries for both discrete feature space (shown in Algorithm \ref{alg:fafam}) and continuous feature space (shown in Algorithm \ref{alg:fafam2}).
\begin{algorithm}[htb]
\caption{Approximation of Adversarial Vulnerability with Discrete Feature}
\label{alg:fafam}
\textbf{Input: }{Graph $G^{+}=(\mathbf{A}^{+},\mathbf{X}^{+})$. Victim node $v$. Number of queries $K_t$.}

\textbf{Output: }{Approximated Adversarial Vulnerability AV($v$) for $v$.}

\begin{algorithmic}[1]
\STATE {\bfseries initialize} Choose one fake node $u$ and connect it to and only to $v$, randomly initialize the fake node's feature $x_{u}$. Keep other fake nodes isolated.
\STATE Randomly sample a sequence $I_t$ from $\{1, 2, ..., D\}$ with length $K_t$. $D$ is the dimension of nodes' feature.
\FOR{$i \in I_t$}
\IF{$x_{u}[i]\leftarrow 1-x_{u}[i]$ makes 
$\mathcal{L}(G^{+})$ smaller}
\STATE $x_{u}[i]\leftarrow 1-x_{u}[i]$
\ENDIF
\ENDFOR
\STATE {\bfseries return} AV($v$)$\leftarrow x_{u}$
\end{algorithmic}
\end{algorithm}

\begin{algorithm}[htb]
\caption{Approximation of Adversarial Vulnerability with Continuous Feature}
\label{alg:fafam2}
\textbf{Input: }{Graph $G^{+}=(\mathbf{A}^{+},\mathbf{X}^{+})$. Victim node $v$. Number of queries $K_t=n*T$. $n$ is the size of NES population and $T$ is the number of iterations. Search variance $\sigma$. Optimization step size $\eta$.}

\textbf{Output: }{Approximated Adversarial Vulnerability AV($v$) for $v$.}

\begin{algorithmic}[1]
\STATE {\bfseries initialize} Choose one fake node $u$ and connect it to and only to $v$, randomly initialize the fake node's feature $x_{u}$. $l(x_u)$ is the corresponding loss. Keep other fake nodes isolated.
\FOR{$t = 1,2,...,T$}
\STATE $g \leftarrow 0$
\FOR{$i = 1,2,..., [\frac{n}{2}]$}
\STATE $u_i\leftarrow \mathcal{N}(0,I)$
\STATE $g \leftarrow g+l(x_u+\sigma * u_i)*u_i$
\STATE$g \leftarrow g-l(x_u-\sigma * u_i)*u_i$
\ENDFOR
\STATE $grad \leftarrow \frac{1}{n\sigma}g$
\STATE $x_u \leftarrow x_u - grad *\eta$
\STATE Clip $x_u$ between [$\min X, \max X$].
\ENDFOR
\STATE {\bfseries return} AV($v$)$\leftarrow x_{u}$
\end{algorithmic}
\end{algorithm}
The overall algorithm of Cluster Attack is shown in Algorithm \ref{alg:ffa}
\begin{algorithm}[htb]
\caption{Cluster Attack}
\label{alg:ffa}

\textbf{Input: }{Graph $G^{+}=(\mathbf{A}^{+},\mathbf{X}^{+})$. Victim node set $\Phi_{\mathbf{A}}$. Number of fake nodes $N_{fake}$. Number of queries $K=K_{t}*|\Phi_A|+K_{f}*|\Phi_{fake}|$.}

\textbf{Output: }{Manipulated graph $G^{+}=(\mathbf{A}^{+},\mathbf{X}^{+})$.}

\begin{algorithmic}[1]
\FORALL{$v \in \Phi_{\mathbf{A}}$}
\STATE Compute AV($v$) with $K_{t}$ queries using Algorithm \ref{alg:fafam} or Algorithm \ref{alg:fafam2}.
\ENDFOR
\STATE Cluster the victim nodes according to their Adversarial Vulnerability and get equivalent adjacent matrix $\mathbf{B}$ using Proposition \ref{equivalence}.
\STATE Set $\mathbf{X}_{fake}$ as cluster center using Eq. (\ref{cluster_center}).
\FORALL{$v \in \Phi_{\mathbf{fake}}$}
\STATE Optimize the feature of fake node $v$ using Eq. (\ref{discrete}) and Eq. (\ref{continuous}) with $K_{f}$ queries.
\ENDFOR
\STATE {\bfseries return} $G^{+}=(\mathbf{A}^{+},\mathbf{X}^{+})$
\end{algorithmic}
\end{algorithm}

\section{Additional Experiments}
Here, we provide addition experiments on discrete feature space.
\subsection{\textbf{Performance with Different Number of Fake Nodes.}}
In this section, we evaluate the performance of Cluster Attack along with other baselines with different number of fake nodes. We fix the number of victim nodes and vary the number of fake nodes to examine the success rates. We set $|\Phi_{A}|=10$. The success rates are shown in Figure \ref{fakeN}. We see that the success rate is higher when there are more fake nodes. For Cluster Attack, we conjecture that this is because the number of victim nodes in each cluster gets smaller when there are more clusters. Thus the Adversarial Vulnerability of the victim nodes in each cluster are able to be closer to each other and they are easier to be attacked by the same fake node. Among all methods, our Cluster Attack achieves the highest success rate.
\begin{figure}[htbp]
\centering
\subfigure[Experiment on Cora]{
\begin{minipage}[t]{0.5\linewidth}
\centering
\includegraphics[width=1.6in]{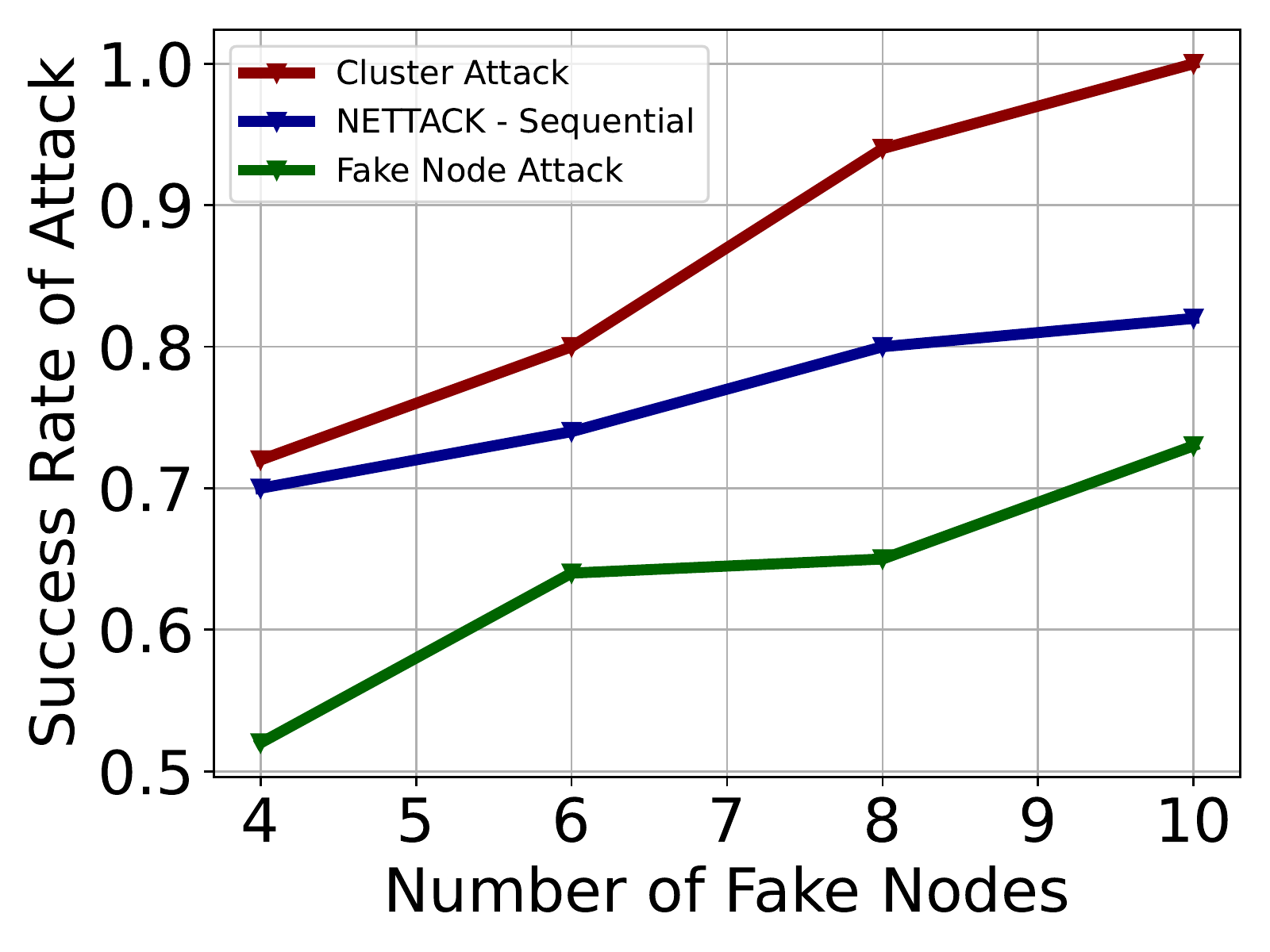}
\end{minipage}%
}%
\subfigure[Experiment on Citeseer]{
\begin{minipage}[t]{0.5\linewidth}
\centering
\includegraphics[width=1.6in]{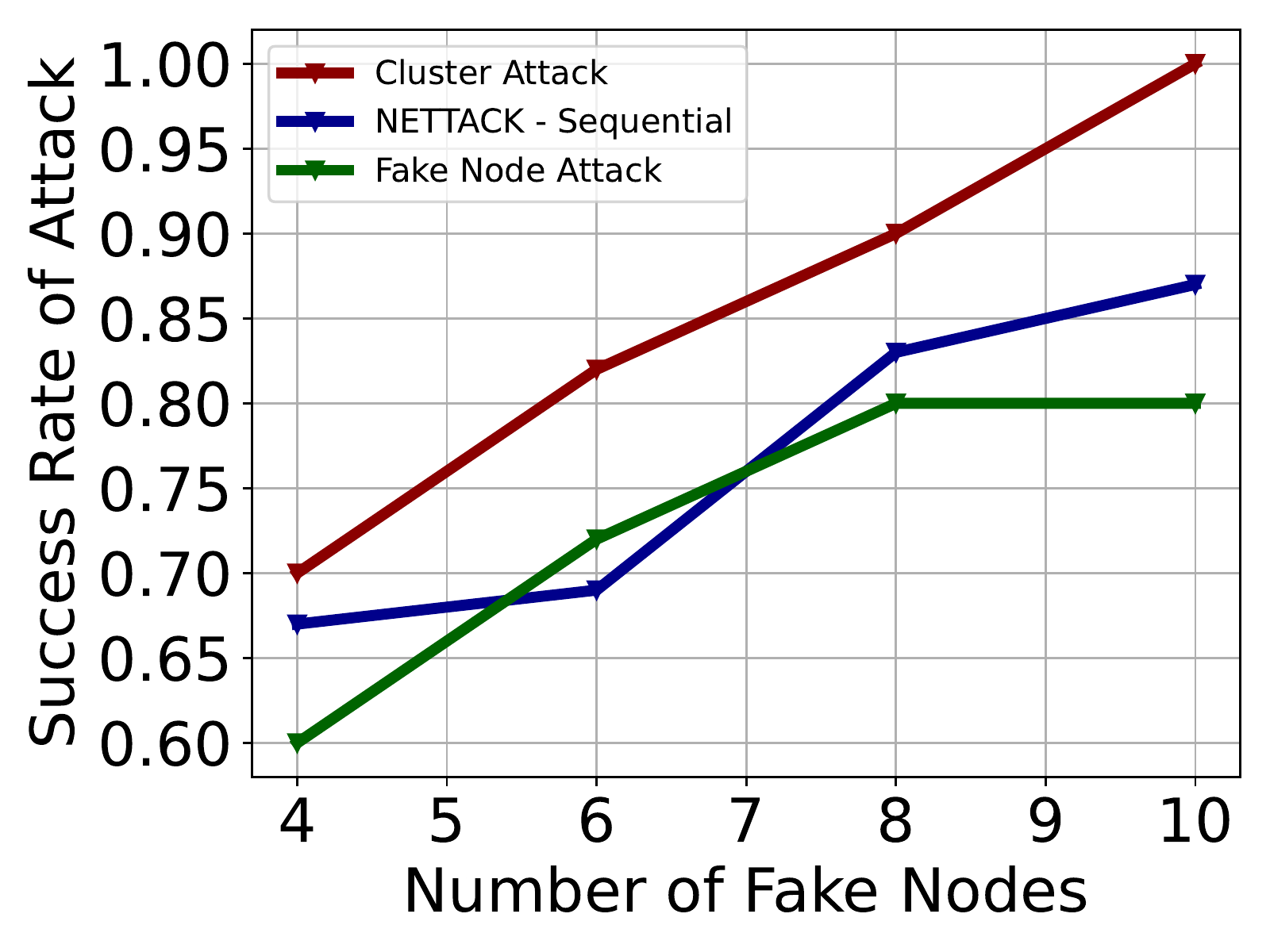}
\end{minipage}
}%
\centering
\caption{Success rates of Cluster Attack with different number of fake nodes along with other baselines.}
\label{fakeN}
\end{figure}

\subsection{\textbf{Analysis of Cluster Attack on Victim Nodes with Different Degrees.}}
In this section, we evaluate the performance of Cluster Attack on victim nodes with different degrees. We uniformly set $N_{fake}=4$, $\Delta_{edge}=|\Phi_{A}|=10$. The success rates of Cluster Attack on victim nodes with different degrees are shown in Figure \ref{degree} along with the proportion of sampled victim nodes with each degree. Victim nodes with degrees larger than or equal to 7 are counted together since they only account for a small proportion.
\begin{figure}[htbp]
\centering
\subfigure[Experiment on Cora]{
\begin{minipage}[t]{0.5\linewidth}
\centering
\includegraphics[width=1.6in]{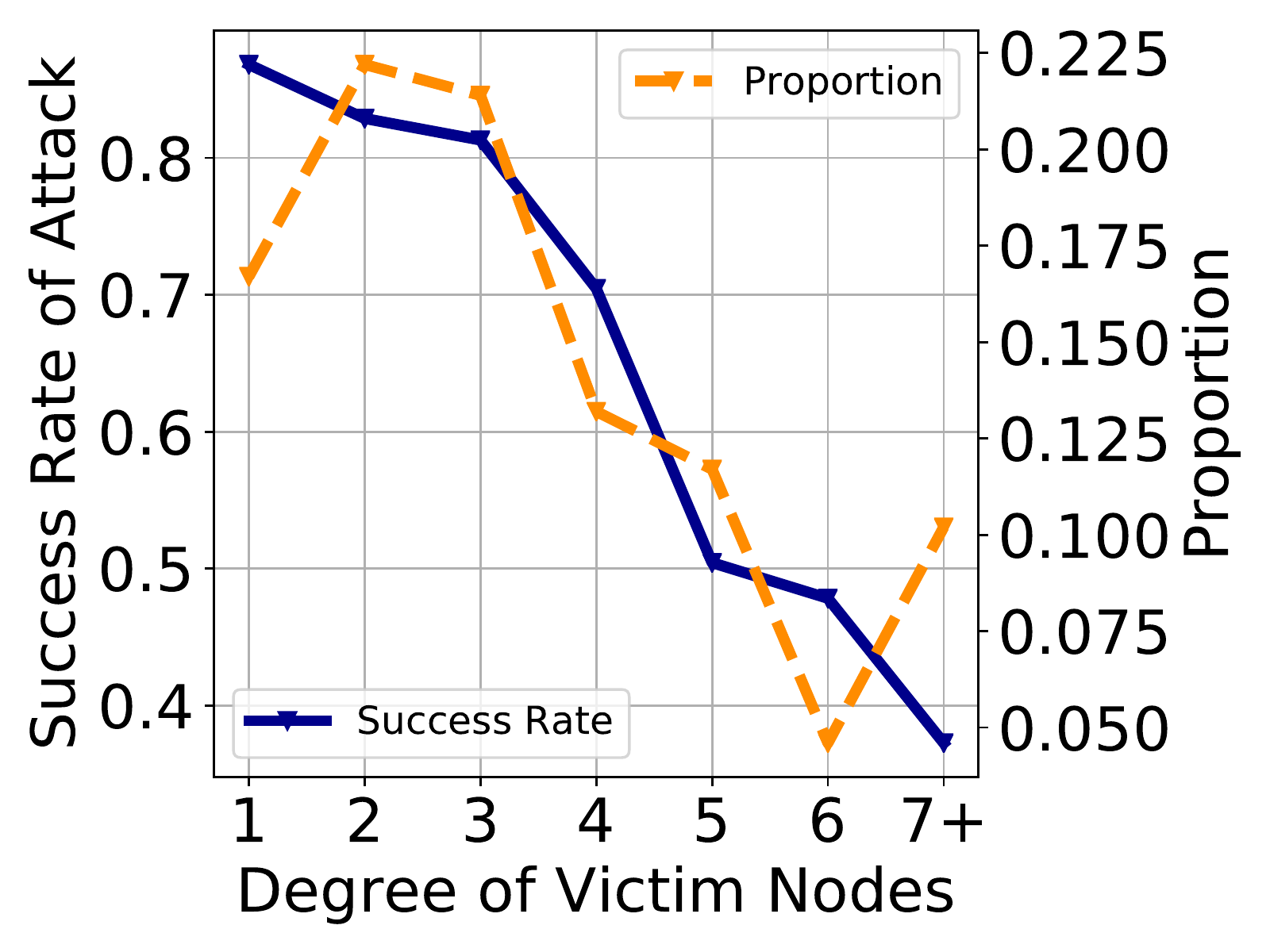}
\end{minipage}%
}%
\subfigure[Experiment on Citeseer]{
\begin{minipage}[t]{0.5\linewidth}
\centering
\includegraphics[width=1.6in]{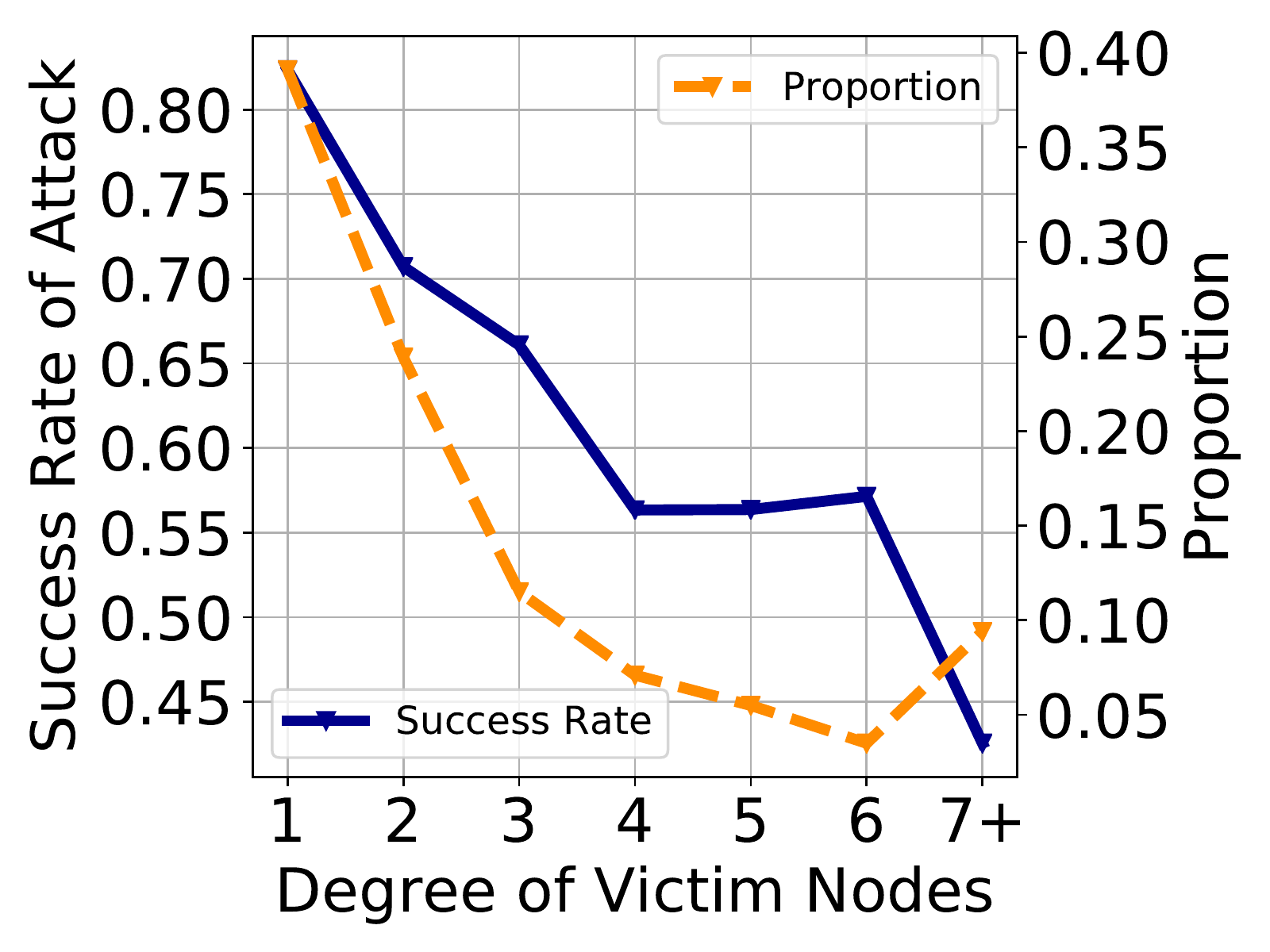}
\end{minipage}
}%
\centering
\caption{Success Rates of Cluster Attack on Victim Nodes with Different Degrees.}
\label{degree}
\end{figure}

It can be seen from Figure \ref{degree} that victim nodes with higher degrees are more robust to our attack in general. We conjecture that when a victim node has a relatively large number of neighbors, adding one fake node as its neighbor has less impact on it and thus is less likely to change its predicted label.

\subsection{\textbf{Ablation Study.}}
In this section, we examine the effectiveness of our cluster metric Adversarial Vulnerability (AV). We uniformly set $N_{fake}=4$, $\Delta_{edge}=|\Phi_{A}|=10$. We examine the success rate of Cluster Attack without AV, i.e., the victim nodes' AVs are randomly set. The results are shown in Table \ref{random-result}. Cluster Attack without AV performs worse than vanilla Cluster Attack with AV, which demonstrates the effectiveness of our AV. AV is related to the vulnerability of victim nodes. Nodes with similar AVs in a cluster are easier to be affected together by single fake node. Thus the success rate of Cluster Attack with AV is better than without AV.
\begin{table}[hbtp]
	\caption{Success rates of Cluster Attack with and without AV on Cora.}
	\label{random-result}
	\centering
	\begin{tabular}{l|l}
		\toprule
        Method& Success Rate\\ 
		\midrule
		Cluster Attack - without AV &0.62\\
		Cluster Attack&\textbf{0.72}\\
		\bottomrule
	\end{tabular}
\end{table}

\end{document}